\documentclass[sn-mathphys-num]{sn-jnl}% Math and Physical Sciences Numbered Reference Style 
%%\documentclass[sn-mathphys-ay]{sn-jnl}% Math and Physical Sciences Author Year Reference Style
%%\documentclass[sn-aps]{sn-jnl}% American Physical Society (APS) Reference Style
%%\documentclass[sn-vancouver,Numbered]{sn-jnl}% Vancouver Reference Style
%%\documentclass[sn-apa]{sn-jnl}% APA Reference Style 
%%\documentclass[sn-chicago]{sn-jnl}% Chicago-based Humanities Reference Style

%%%% Standard Packages
%%<additional latex packages if required can be included here>

\usepackage{graphicx}%
\usepackage{multirow}%
\usepackage{amsmath,amssymb,amsfonts}%
\usepackage{amsthm}%
\usepackage{mathrsfs}%
\usepackage[title]{appendix}%
\usepackage{xcolor}%
\usepackage{textcomp}%
\usepackage{manyfoot}%
\usepackage{booktabs}%
\usepackage{algorithm}%
\usepackage{algorithmicx}%
\usepackage{algpseudocode}%
\usepackage{listings}%
\usepackage{lineno}

%%%%

%%%%%=============================================================================%%%%
%%%%  Remarks: This template is provided to aid authors with the preparation
%%%%  of original research articles intended for submission to journals published 
%%%%  by Springer Nature. The guidance has been prepared in partnership with 
%%%%  production teams to conform to Springer Nature technical requirements. 
%%%%  Editorial and presentation requirements differ among journal portfolios and 
%%%%  research disciplines. You may find sections in this template are irrelevant 
%%%%  to your work and are empowered to omit any such section if allowed by the 
%%%%  journal you intend to submit to. The submission guidelines and policies 
%%%%  of the journal take precedence. A detailed User Manual is available in the 
%%%%  template package for technical guidance.
%%%%%=============================================================================%%%%

%% as per the requirement new theorem styles can be included as shown below
\theoremstyle{thmstyleone}%
\newtheorem{theorem}{Theorem}%  meant for continuous numbers
%%\newtheorem{theorem}{Theorem}[section]% meant for sectionwise numbers
%% optional argument [theorem] produces theorem numbering sequence instead of independent numbers for Proposition
\newtheorem{proposition}[theorem]{Proposition}% 

\theoremstyle{thmstyletwo}%
\newtheorem{example}{Example}%

\theoremstyle{thmstylethree}%

\raggedbottom
%%\unnumbered% uncomment this for unnumbered level heads

\begin{document}
% \linenumbers 

\title[Article Title]{A Novel Task-Driven Method with Evolvable Interactive Agents Using Event Trees for Enhanced Emergency Decision Support}
%%=============================================================%%
%% GivenName	-> \fnm{Joergen W.}
%% Particle	-> \spfx{van der} -> surname prefix
%% FamilyName	-> \sur{Ploeg}
%% Suffix	-> \sfx{IV}
%% \author*[1,2]{\fnm{Joergen W.} \spfx{van der} \sur{Ploeg} 
%%  \sfx{IV}}\email{iauthor@gmail.com}
%%=============================================================%%

\author[1]{\fnm{First} \sur{Xingyu Xiao}}\email{xxy23@mails.tsinghua.edu.cn}

\author[2]{\fnm{Second} \sur{Peng Chen}}\email{ chenpeng23@mails.ucas.ac.cn}

\author[1]{\fnm{Third} \sur{Ben Qi}}\email{qib22@mails.tsinghua.edu.cn}

\author*[1]{\fnm{Fourth} \sur{Jingang Liang}}\email{jingang@tsinghua.edu.cn}

\author[1]{\fnm{Fifth} \sur{Jiejuan Tong}}\email{tongjj@tsinghua.edu.cn}

\author[1]{\fnm{Sixth} \sur{Haitao Wang}}\email{wanght@tsinghua.edu.cn}

\affil*[1]{\orgdiv{Institute of Nuclear and New Energy Technology}, \orgname{Tsinghua University}, \orgaddress{\street{Shuangqing Road}, \city{Beijing}, \postcode{100084}, \state{Beijing}, \country{China}}}

\affil[2]{\orgdiv{Institute of Software}, \orgname{Chinese Academy of Sciences}, \orgaddress{\street{Zhongguancun South Fourth Street}, \city{Beijing}, \postcode{100190}, \state{Beijing}, \country{China}}}

%%==================================%%
%% Sample for unstructured abstract %%
%%==================================%%

\abstract{As climate change and other global challenges increase the likelihood of unforeseen emergencies, the limitations of human-driven strategies in critical situations become more pronounced. Inadequate pre-established emergency plans can lead operators to become overwhelmed during complex systems malfunctions. This study addresses the urgent need for agile decision-making in response to various unforeseen incidents through a novel approach, EvoTaskTree (a task-driven method with evolvable interactive agents using event trees for emergency decision support). This advanced approach integrates two types of agents powered by large language models (LLMs): task executors, responsible for executing critical procedures, and task validators, ensuring the efficacy of those actions. By leveraging insights from event tree analysis, our framework encompasses three crucial tasks: initiating event subevent analysis, event tree header event analysis, and decision recommendations. The agents learn from both successful and unsuccessful responses from these tasks. Finally, we use nuclear power plants as a demonstration of a safety-critical system. Our findings indicate that the designed agents are not only effective but also outperform existing approaches, achieving an impressive accuracy rate of up to 100 \% in processing previously unencountered incident scenarios. This paper demonstrates that EvoTaskTree significantly enhances the rapid formulation of emergency decision-making.}

%%================================%%
%% Sample for structured abstract %%
%%================================%%

\keywords{Emergency decision support, Event tree analysis, Large language models (LLMs), Evolvable interactive agents, EvoTaskTree }

%%\pacs[JEL Classification]{D8, H51}

%%\pacs[MSC Classification]{35A01, 65L10, 65L12, 65L20, 65L70}

\maketitle
\section{Introduction}\label{sec1}

In the face of escalating global challenges, such as climate change, the frequency and severity of unforeseen emergencies are expected to rise\cite{glock2015decision}. This trend underscores the inherent limitations of traditional, human-driven strategies in managing critical situations, particularly in the context of safety-critical industries like nuclear power plants\cite{bhattacharya2015green}. The inadequacy of pre-established emergency response plans often leads to overwhelming circumstances for operators, especially during system malfunctions where swift and accurate decision-making is crucial.

The traditional decision-support method is multi-criteria decision-making (MCDM). Although it is highly effective, the method is time-consuming and complex\cite{taticchi2015review, Crichton}, making it incapable of providing a rapid response to unforeseen situations. The popular real-time online decision support system for nuclear power plant emergencies (RODOS) has utilized this method for decision-making \cite{Fertier}. This method often includes four steps, construct a criteria system and case description, determine suitable alternatives for achieving the goals of the problems, and make final decision. Specific methods of MCDM include analytic hierarchy process (AHP) \cite{Liu2020}, technique for order preference by similarity to ideal solution (TOPSIS) \cite{Shih}, preference ranking organization method for enrichment evaluations (PROMETHEE) \cite{Venkata}, simple additive weighting (SAW) \cite{Simanaviciene}, visekriterijumsko kompromisno rangiranje (VIKOR) \cite{Cristóbal}, complex proportional assessment (COPRAS) \cite{Roy2019}, and multi-attribute utility theory (MAUT) \cite{Figueira} and so on. Applications of MCDM in NPPs are primarily concentrated on site selection, post-incident recovery, and off-site emergency response\cite{Xiao2024}. Moreover, to ensure the safety of NPPs, experts have developed a series of works, including qualitative risk assessment, and emergency plans. However, there is currently a scarcity of work that integrates both approaches\cite{Xiao2024}. 

The current approaches for decision support remain inherently subjective. This subjectivity arises from the fact that emergency plans are ultimately derived from human-generated content, making them susceptible to certain limitations. Large language models (LLMs) have consistently outperformed average human benchmarks across various metrics\cite{kasneci2023chatgpt}. They leverage vast databases to function as comprehensive knowledge repositories. Additionally, They have increasingly been deployed as expert assistants in various fields, such as medicine \cite{thirunavukarasu2023large}, education \cite{kasneci2023chatgpt}, and law \cite{cui2023chatlaw}. Thus, we aim to capitalize on the extensive knowledge embedded in LLMs to minimize the inherent subjectivity in the formulation of emergency plans.

In this study, we develop a zero-shot strategy named EvoTaskTree (a task-driven method with evolvable interactive agents using event trees for enhanced emergency decision support) that covers three tasks: two tasks in event tree analysis and decision recommendations. There are mainly two types of agents in the simulacrum: task executors and validators. These agents will perform three tasks for different emergency initial events by referencing a comprehensive successful and unsuccessful experience base. Finally, we use a nuclear power plant as a demonstration of a safety-critical system.  An overview of the simulated environment is shown in Figure~\ref{inturface}. It aids operators in effectively managing new types of incidents, providing more effective support for safety-critical industries decision-making.

\begin{figure}[H]
\centering
\includegraphics[width=0.9\textwidth]{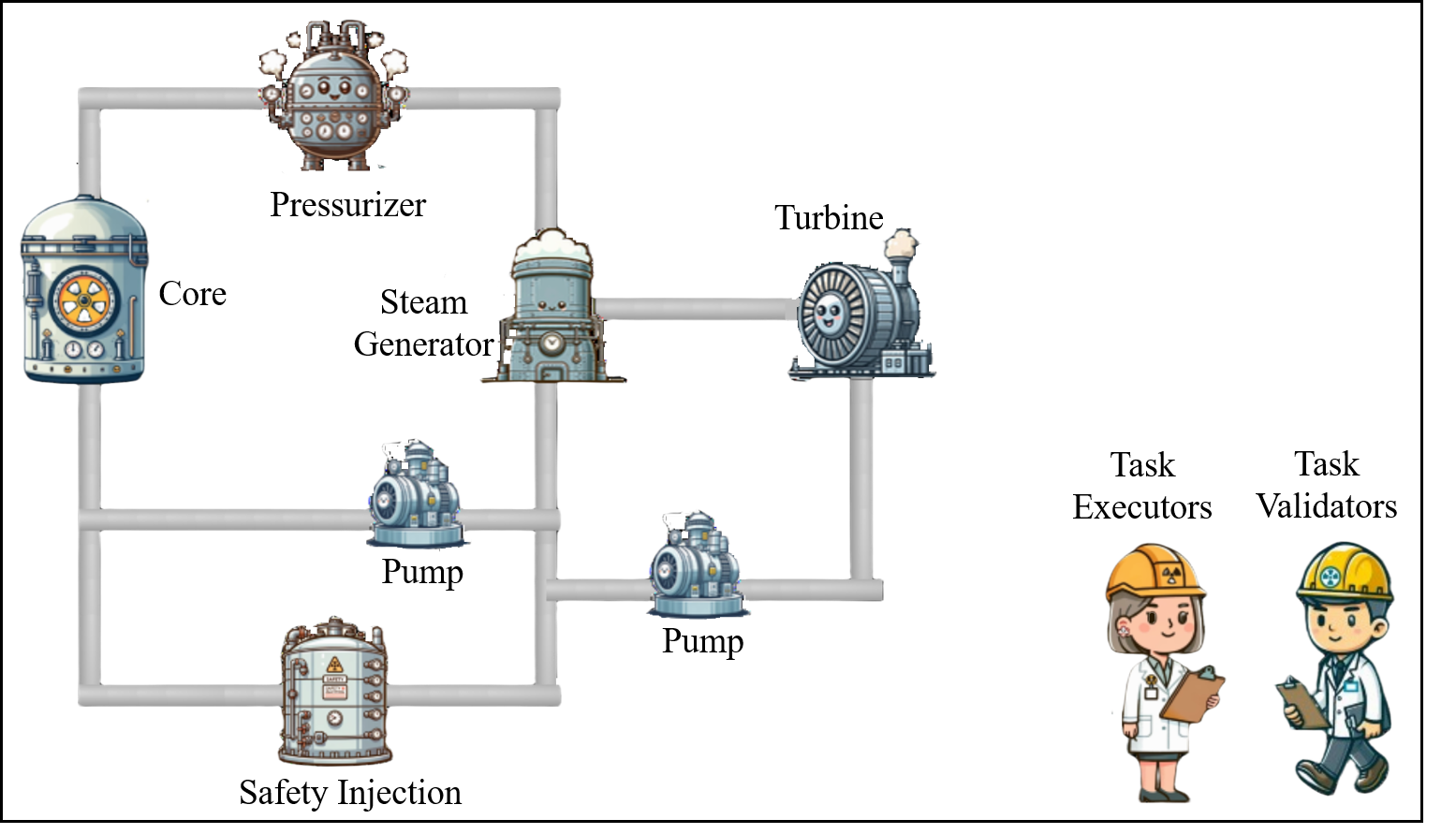}
\caption{{An overview of NPP simulacrum. The operators are autonomous agents powered by large language models. An interesting finding is that the agents can keep improving performance over time without manually labeled data.}}
\label{inturface}
\end{figure}

The main contributions of our work are summarized as follows:
\begin{itemize}
\item We have constructed a task-driven virtual platform for complex systems, which includes two types of agents powered by large language models (LLMs).
\item We integrate event tree knowledge from qualitative risk assessment with emergency decision-making. This integration aids in the analysis of emergency decision-making outcomes.
\item We propose a novel approach that can rapidly and effectively respond to unforeseen emergencies, while being minimally reliant on human subjectivity.
\end{itemize}

\section{Event Tree Theory in Emergency Decision Support}\label{sec10}

This section introduces the foundational theories of event trees, providing an overview of their basic principles. Additionally, it will explore how event trees are integrated with emergency decision support systems, illustrating the methodologies and benefits of combining these tools in the context of emergency response and management.

\begin{theorem}[Event tree analysis]\label{thm1}
Event tree analysis (ETA) is frequently used in conjunction with fault tree analysis (FTA) for quantitative risk analysis (QRA). This method collaboratively develops a logical relationship among the events leading to an accident and estimates the associated risk. In the event tree, the unwanted event is named as an initiating event, and the follow-up consequences are termed as events or safety barriers \cite{ferdous2011fault}.

Specifically, ETA is a technique used to describe the consequences of initiating an event and estimate the likelihood (frequency) of possible outcomes of the event. The ETA represents the dichotomous conditions (e.g., success/ failure, true/false, or yes/no) of the initiating event until the subsequent events lead to the outcome events. Kabir \cite{kabir2017overview} and Lees \cite{lees2012lees} provide a detailed procedure for constructing and analyzing the ETA for a process system.

With an assumption that the events are independent, deterministic and probabilistic approaches use the equation \ref{eq1} to analyze event trees. $P_{i}$ denotes the probability of $i$th ($i$ = 1, 2, 3, ..., n) events, $\lambda $ denotes the frequency for the initiating event, $\lambda_{i} $ denotes the frequency for outcome events, and $j$ denotes the number of success events.
\begin{equation}
\lambda _{i}=\lambda  \times \prod_{i=1}^{j}P_{i}  \times \prod_{i=j}^{n}(1-P_{i})
\end{equation}

\end{theorem}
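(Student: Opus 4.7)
The plan is to derive equation (1) from the stated independence assumption using elementary probability, since the first part of the statement is essentially definitional and requires no argument. First I would formalize the event tree as a rooted binary tree whose root is the initiating event (occurring with frequency $\lambda$) and whose levels $i = 1, \ldots, n$ correspond to the safety barriers; at each level one branch denotes success (probability $P_i$) and the other denotes failure (probability $1 - P_i$). A leaf (outcome event) is then identified with a particular sequence of success/failure labels along the root-to-leaf path.

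Next I would fix an outcome for which exactly $j$ barriers succeed and $n - j$ fail. Invoking the assumed mutual independence of barrier outcomes, the conditional probability of this specific path, given the initiating event, factors as $\prod_{i=1}^{j} P_i \prod_{i=j+1}^{n}(1 - P_i)$ by the product rule for independent events. I would flag at this point that the limits in the stated equation appear to double-count the index $j$ in the factor $(1-P_j)$; the intended reading is surely the product over $i = 1, \ldots, j$ of $P_i$ times the product over $i = j+1, \ldots, n$ of $1 - P_i$, and I would correct the typo before continuing.

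Finally I would convert probability to frequency. Because the initiating event occurs at rate $\lambda$ (events per unit time) and every barrier trial is conditioned on that initiating event having occurred, the expected rate of the designated outcome is $\lambda$ multiplied by the conditional path probability, yielding $\lambda_i = \lambda \prod_{i=1}^{j} P_i \prod_{i=j+1}^{n}(1-P_i)$, which is the claimed formula after the indexing correction.

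The main obstacle is expository rather than mathematical: the calculation is a single application of the product rule, so the real work is justifying the independence hypothesis (common-cause and cascading failures are well-known violations in process-safety analysis) and keeping the units of $\lambda$ (inverse time) distinct from the dimensionless $P_i$. A secondary issue is the labeling convention for "success" versus "failure" branches; I would make clear that any permutation of the $n$ barriers yields the same formula by independence, so the split between the two products is a matter of ordering rather than substance.
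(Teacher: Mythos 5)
The paper supplies no proof of this statement: although it is packaged in a theorem environment, it functions as a definitional recap of event tree analysis, and the frequency formula is simply quoted under the declared independence assumption with the justification deferred to the cited literature (Ferdous et al., Kabir, Lees). There is therefore nothing in the paper to compare your argument against line by line; the only proof in the vicinity is for the subsequent Proposition, which is a different claim. On its own terms your derivation is correct and is the standard one: model the tree as a root-to-leaf path of $n$ independent dichotomous barrier outcomes, apply the product rule to obtain the conditional path probability given the initiating event, and multiply by the initiating-event frequency $\lambda$ to convert to an outcome frequency $\lambda_i$. You are also right to flag the indexing defect in the displayed equation: as written, the index $i=j$ appears in both products and so contributes both $P_j$ and $1-P_j$; the second product should run over $i=j+1,\dots,n$, and the paper leaves this uncorrected. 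Your additional caveats, that the independence hypothesis is the real modelling commitment (common-cause and cascading failures violate it) and that $\lambda$ carries units of inverse time while the $P_i$ are dimensionless, go beyond anything the paper addresses and would strengthen the statement if an actual proof were to be included.
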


Next, a simple system from a nuclear power plant is introduced as a case study to demonstrate the characteristics of fault tree analysis.

\begin{example} \label{example1}

A simplified nuclear power system is illustrated in figure \ref{system}, which primarily consists of two subsystems: the safe injection system and the containment spray system. The core meltdown can only be prevented when both subsystems are functioning properly simultaneously.

\begin{figure}[H]
\centering
\includegraphics[width=0.9\textwidth]{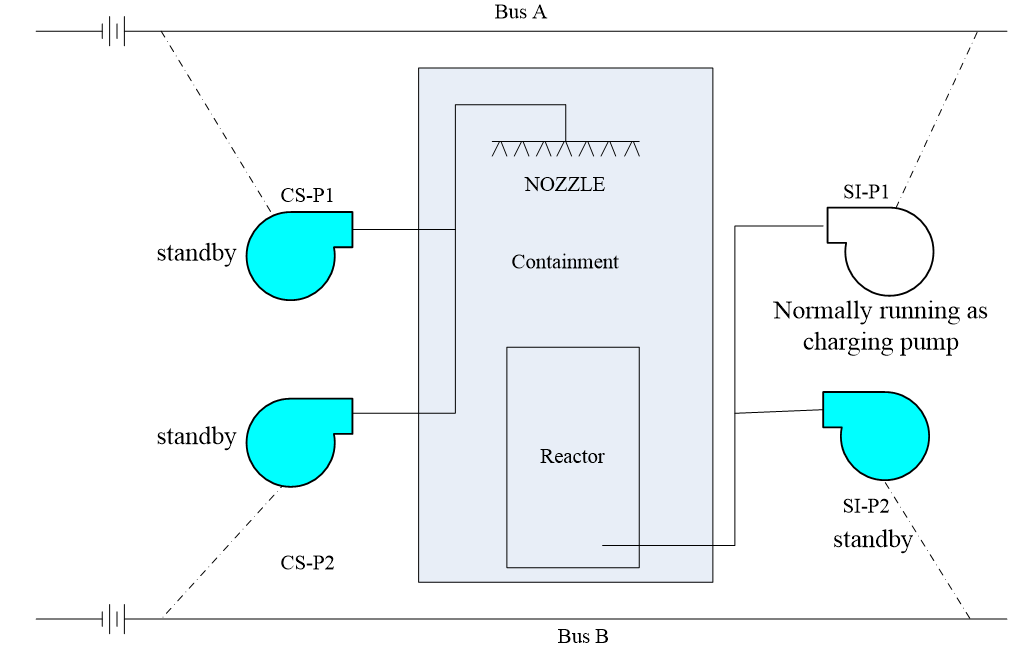}
\caption{{Simplified System Diagram for an NPP.}}
\label{system}
\end{figure}

Figure \ref{ET} illustrates the event tree analysis of the simplified nuclear power system, where a Large LOCA is the initiating event (IE). The first header event ($E_{1}$) is the normal operation of the safe injection system, and the second header event ($E_{2}$) is the proper functioning of the containment spray system. Depending on the sequence of successes or failures of these header events, the event tree leads to different consequences, including "No core meltdown" and "Core meltdown". By applying the event tree theory, the probability of the final consequence ($\lambda _{i}$) is calculated based on the probability of the IE ($\lambda$) and the probabilities of each header event ($P_{i}$).

\begin{figure}[H]
\centering
\includegraphics[width=0.9\textwidth]{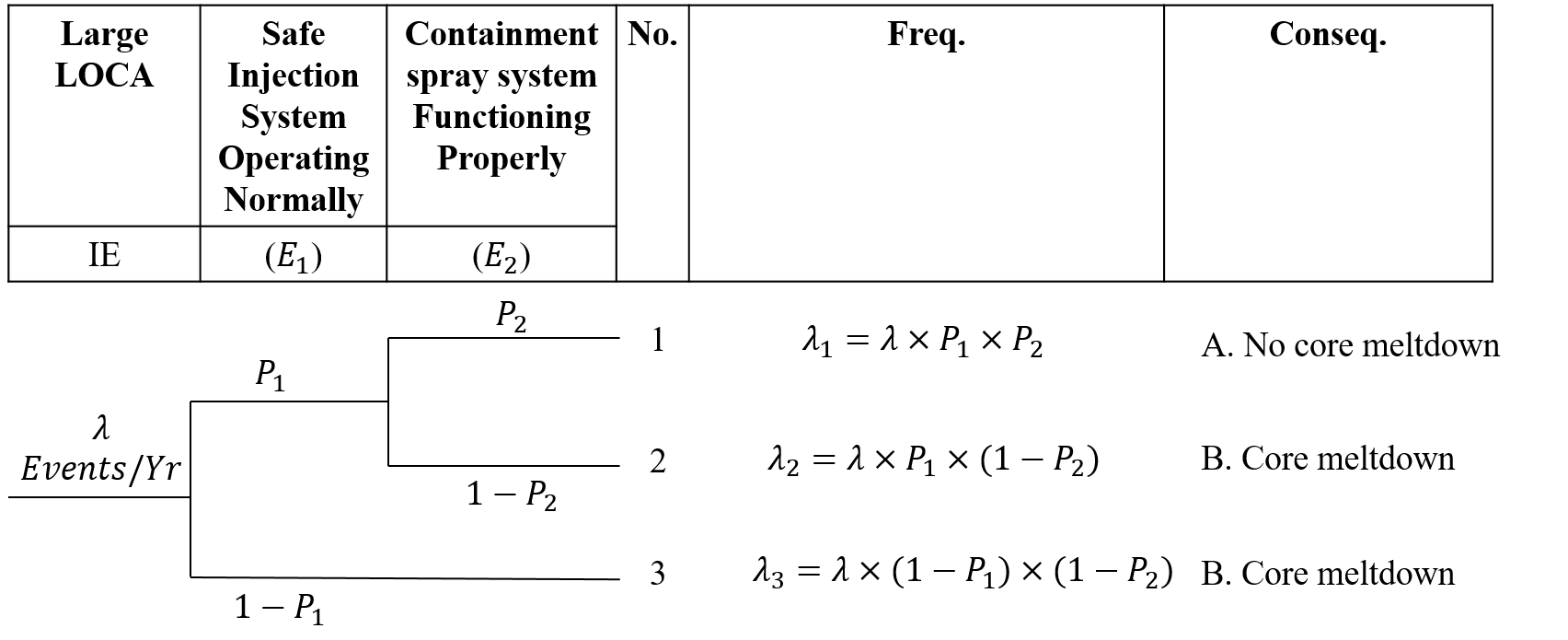}
\caption{{Event tree for Large LOCA.}}
\label{ET}
\end{figure}

\end{example}

\begin{proposition}
The emergency decision support problem in the face of unexpected situations can be translated into the problem of rapidly constructing an event tree in the face of unexpected situations.
\end{proposition}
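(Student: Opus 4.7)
The plan is to argue by constructing an explicit reduction in both directions between an instance of the emergency decision support problem and an instance of event tree construction, so that a rapid solution to one yields a rapid solution to the other. First I would formalize an emergency decision support instance as a tuple consisting of (i) an unexpected initiating event $IE$, (ii) a system state description, (iii) a sequence of candidate safety barriers or operator interventions $E_1, E_2, \ldots, E_n$ whose success/failure determines the plant trajectory, and (iv) a set of admissible end-state outcomes together with a preference or cost structure over them. The goal of emergency decision support is to produce, on the fly, a set of recommended actions that steers the system toward an acceptable outcome.

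Next I would observe that Theorem~\ref{thm1} gives precisely the structure needed for the forward reduction: given an $IE$ together with header events $E_1, \ldots, E_n$, an event tree enumerates the dichotomous success/failure branches, assigns a consequence to every leaf, and (via equation~(1)) attaches an outcome frequency $\lambda_i$ to each path. Thus, to each emergency decision support instance I can associate a candidate event tree whose initiating event is $IE$, whose header events are the available interventions in the order they must be executed, and whose leaves are labeled by the corresponding admissible outcomes. Conversely, given an event tree rooted at $IE$, the recommended decisions are read off directly by selecting, at each header event, the branch leading to the subtree with the highest-preference leaf (or lowest accumulated risk $\lambda_i$); this recovers an emergency decision support solution.

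Having established both directions of the reduction, the rapid-construction claim follows: if the event tree for a novel $IE$ can be assembled in near real time, then the decision recommendation is obtained by a single traversal whose cost is linear in the tree size, so response latency is dominated by the event tree construction step itself. Example~\ref{example1}, with the Large LOCA initiator and the two header events $E_1, E_2$, can then be invoked as a concrete witness that the reduction is faithful: the operator's decision ("actuate safe injection, then actuate containment spray") is exactly the path in the tree that avoids the \emph{Core meltdown} leaves.

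The main obstacle I expect is not the reduction itself but the well-definedness of the header-event list when the incident is genuinely unforeseen: classical ETA presumes $E_1, \ldots, E_n$ are enumerated in advance by domain experts, whereas in the emergency support setting they must be synthesized on demand. I would therefore close the argument by noting that the proposition reduces the original open-ended decision problem to a more structured subproblem, namely the rapid identification of the relevant subevents and header events, which is exactly what the three tasks of EvoTaskTree (initiating event subevent analysis, header event analysis, and decision recommendation) are designed to solve, thereby justifying the architecture introduced in the remainder of the paper.
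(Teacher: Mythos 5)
Your proposal is correct and follows essentially the same route as the paper: both arguments identify the emergency plan with the choice of branches in the event tree rooted at the initiating event (the paper phrases this as driving each $P_i$ to $1$ so the core-meltdown probability vanishes, you phrase it as selecting the path avoiding the \emph{Core meltdown} leaves), both invoke Example~\ref{example1} as the witness, and both conclude by reducing decision support to the rapid identification of subevents and header events. Your version is merely more formal, adding an explicit two-way reduction and a traversal-cost remark that the paper leaves implicit.
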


\begin{proof}
In responding to emergencies at an NPP, the ultimate goal of emergency measures is to prevent the core meltdown. The event tree provides us with the progression of events within the system under different conditions when facing unexpected situations. Therefore, if our proposed emergency plan can guide each header event towards a successful consequence, we can completely prevent core meltdown and effectively manage unexpected event. 

For example \ref{example1}, if through our emergency response, we set $P_{1}$ and $P_{2}$ to 1, then $1-P_{1}$ and $1-P_{2}$ will become 0. This results in a zero probability of a core meltdown. It indicates that our emergency measures are effective. Therefore, when dealing with an unforeseen emergency, if we can obtain information from the event tree accurately and effectively, we can develop robust emergency response strategies. Consequently, we trace the task of emergency decision support back to the analysis of initiating events subevents within the event tree, as well as the analysis of header events.
\end{proof}

The upper portion of the diagram~\ref{pipe} illustrates the integration of event tree theory with emergency decision support. This integration completes a closed-loop analysis from the initial event to the final decision support. Additionally, this diagram details the input and output at each step of our EvaTaskTree, encompassing initial event and sub-event analysis, event tree header events, and decision recommendations. In summary, integrating ETA with the emergency decision support process not only enables the proposal of effective emergency measures but also allows for the reutilization of prior knowledge generated during the risk assessment phase, thereby fully leveraging the results of ETA.

\begin{figure}[H]
\centering
\includegraphics[width=0.9\textwidth]{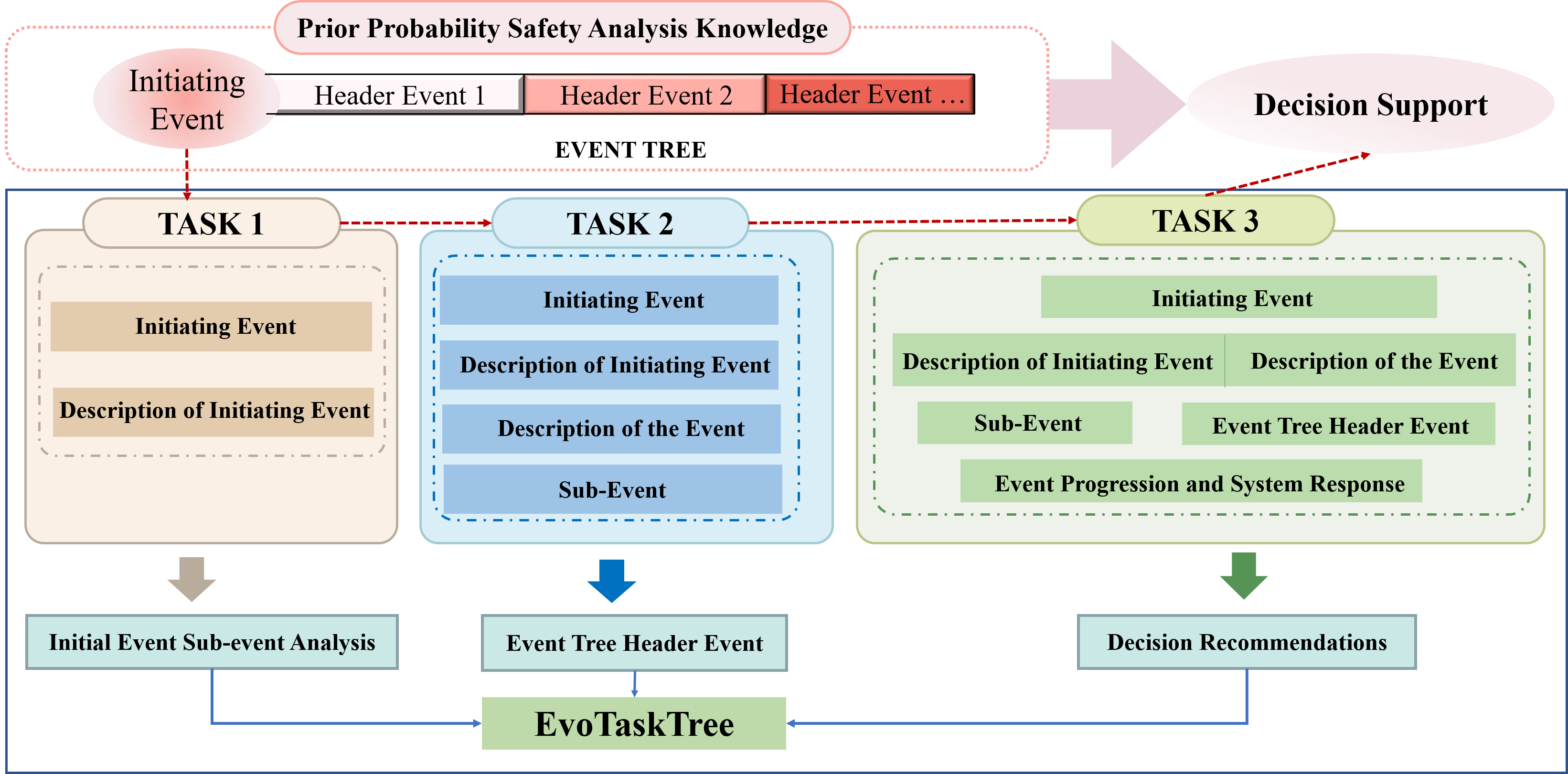}
\caption{{The integration of Event Tree Theory with emergency decision support.}}
\label{pipe}
\end{figure}

\section{Methodology}

Evolution is a critical feature of our EvaTaskTree. This section will provide a detailed explanation of the evolution process, along with specific implementation details regarding inference and evaluation.

\subsection{Evolution}
\label{pipe}

The whole process of our EvoTaskTree is shown in Figure ~\ref{pipeline} (a). EvoTaskTree is a parameter-free strategy. For each task, the task executors and task validators interact and iterate continuously to complete it. The three tasks are interconnected through a task flow, collectively fulfilling the decision-support function. Figure ~\ref{pipeline} (b) provides a detailed description of the agent-agent interaction. There are two important modules in this strategy, namely the record library and the experience base. Successful cases, which are to be used as references for future interventions, are compiled and stored in the record library. For cases where treatment fails, the results will be added to the experience base. In the three task simulations, we employ dense retrievers to retrieve related historical records and guiding principles, assisting agents in delivering better results. As experience and records are accrued, they are actively applied, with both the record library and the experience base being perpetually updated. Additionally, the determination of correctness is autonomously completed by the agent. The complete code is available on the GitHub website (https://github.com/Crystalxy123/EvoTaskTree/tree/master).

\begin{figure}[H]
\centering
\includegraphics[width=0.9\textwidth]{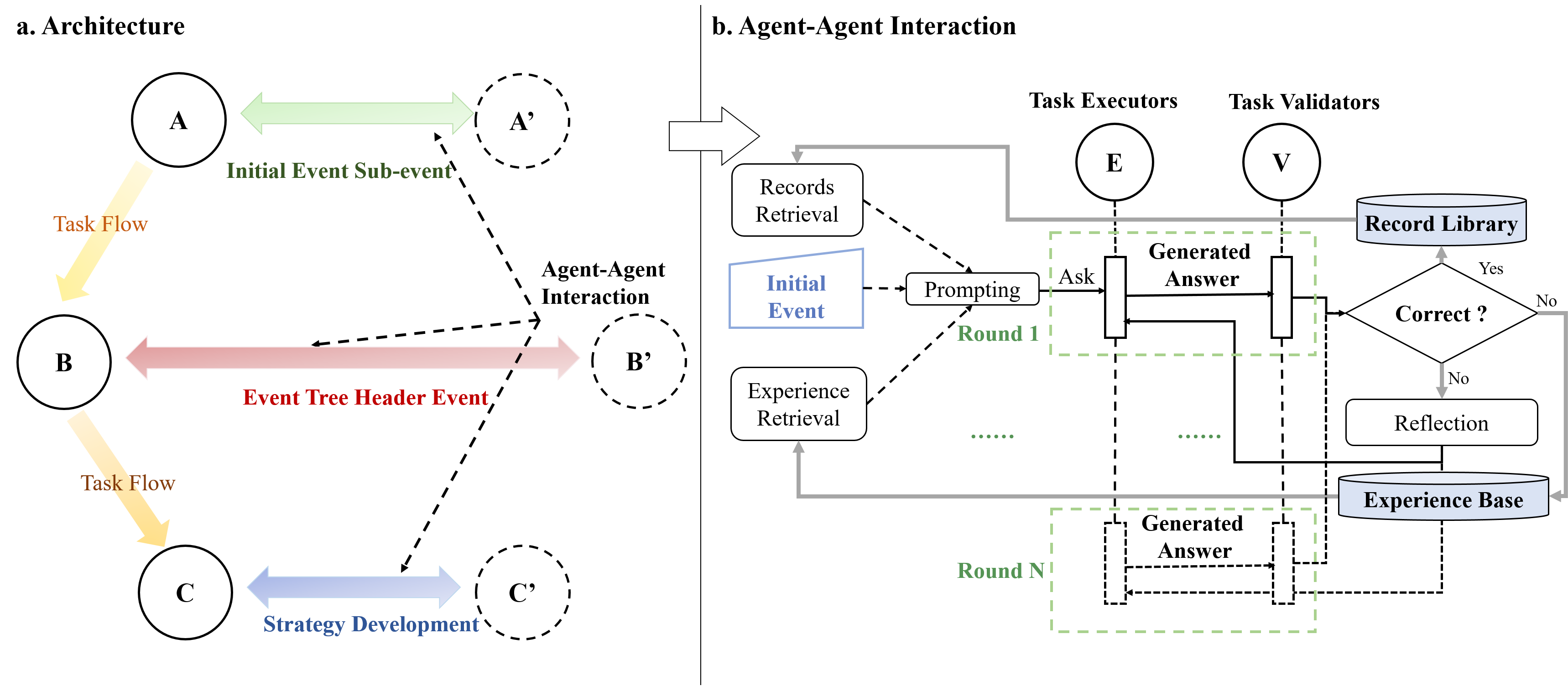}
\caption{{The overview of the EvoTaskTree. (a) Architecture. It is a parameter-free strategy. For each task, the task executors and task validators interact and iterate continuously to complete tasks. The three tasks are interconnected through a task flow, collectively fulfilling the decision support. (b) Agent-Agent Interaction. This diagram illustrates the methods to achieve self-evolution: 1) accumulating examples; 2) adding correct responses directly to the record library; 3) adding incorrect responses directly to the experience base; 4) utilizing both knowledge to retrieve the most similar content for reasoning during the inference process. 5) During the training process, consistently repeat steps 1, 3, and 4 until the correct response is identified;
6) During the testing process, directly apply step 4 to generate the final answer.}}
\label{pipeline}
\end{figure}

\textbf{Record Library Building.} During the process of completing the tasks, it is highly beneficial for operators to reference previously validated records. These records contain abundant knowledge and demonstrate the rationale behind accurate and adequate responses to diverse emergencies. Therefore, we propose to build record libraries for agents to sharpen their abilities. The library is structured in the format of question-answer pairs, where the question details the fault condition requiring task completion, and the answer contains the validated response. As shown in the upper part of Figure~\ref{pipeline}, for each generated answer from agents, the question-answer pair will be added to the record library if the answer is correct. When a new query is coming, it will search for related records from the library based on dense retrieval techniques using \cite{nori2023}. Each task has its private record library and experience base to avoid irrelevant records utilization. For the simulated three tasks, we aim to enable agents to gain professional experience from historical records. 

\textbf{Experience Base Expanding.} Learning from errors is also crucial for the growth of operators. We believe that LLM-powered professional agents can engage in self-reflection from these errors, distilling relevant principles (experience) to ensure correct task completion when encountering similar issues in future cases. We draw inspiration from a previous study \cite{zhang2024} to allow doctor agents to learn from failures. As shown in the below part of Figure ~\ref{pipeline}, if the answer is wrong, the question-answer pair be added to the experience base. To eliminate the influence of noise and maximize the utilization of the experience base, we incorporate additional judgment when utilizing experience. This judgment involves evaluating whether the most similar experience retrieved based on semantic similarity. 

\subsection{Inference}
\label{inference}
Based on the record library and experience base introduced above, we enhance the prompt for the agents using successful and unsuccessful responses. We get the most similar ones by comparing existing queries in the record library and experience base with the current query. Then, the chosen successful and unsucessful question-answer pair are arranged for few-shot examples in the prompt. Both records and experience are retrieved using cosine similarity and text is embedded into vector space by the "text-embedding-ada-002" model provided by OpenAI.

\subsection{Evalution}
As for the evaluation, to ensure consistency in evaluation, the results of all test sets were manually annotated by experts. Due to the unique characterization of safety-critical industries, we aim to achieve 100\% accuracy in our results. Consequently, we categorized all results into two types: correct cases and incorrect cases. The accuracy is then determined by the proportion of correct cases within the test set.

\section{LLM-based EvoTaskTree Simulacrum in Nuclear Power Plant}

To effectively demonstrate the applicability of our EvoTaskTree in industrial settings, we have chosen nuclear power plants (NPPs) as a representative example. NPPs play a crucial role in the global energy landscape due to their capacity to produce large amounts of electricity with relatively low greenhouse gas emissions. Many countries adopt a positive attitude toward nuclear energy. Given the severe consequences of potential accidents in NPPs, it is crucial to support operators in making swift decisions during emergency situations.

\subsection{Environment Settings}

We first develop a comprehensive NPP simulation environment inspired by previous studies\cite{Hospital, Software}. The NPP sandbox is implemented using Pygame\cite{Pygame}, a cross-platform library of Python modules specifically designed for the development of video games. Finally, as shown in Figure\ref{inturface}, it illustrates a simplified NPP system. The overall system functions by transferring heat from the reactor core, where nuclear fission takes place. Heat is transferred through the pressurizer to maintain system pressure. The heat then moves to the steam generator, where water is converted to steam. Finally, the steam is sent to the turbine, where it drives mechanical power generation. The pumps ensure a continuous flow of coolant, and the safety injection system acts as a critical safeguard mechanism.

\subsection{Definition of Decision Support Tasks}
To support emergency strategies for industries, as we illustrated in section\ref{sec10}, we have designed three interdependent tasks: initiating event subevent analysis, event tree header event analysis, and decision recommendations. These tasks are closely related and collectively form an essential part of the industries response system. Task1 primarily focuses on the reclassification of events. Task2 requires listing the header events, including specific event information, and ensuring the sequence of events follows their developmental progression. Task3 is our final critical task,  providing decision support. The contents of each task are as follows.

\begin{itemize}
\item Task1. Initiating event subevent analysis identifies. This provides a foundational analysis for task2 and task3. For task1, the inputs are the initial event name and description, and the output is the subevent of the initial event.

\item Task2. Event tree header event analysis involves identifying and listing header events that could lead to accidents. This task analyzes the subsequent events systematically. The inputs are the initial event, initial event description, subevent (generated from task1), and event process and system response, while the output is the event tree header event.

\item Task3. Decision recommendations involves developing and providing specific operational decisions to address and mitigate the adverse impact after identifying and analyzing the potential subevents and header events. The inputs are initial event, initial event description, subevent (generated from task1), event process and system response, and event tree header event (generated from task2), while the output is operator action.
\end{itemize}

\subsection{Agent Roles}
\label{prompt}

We designed two types of roles for each task in the NPP simulacrum, including initial event subevent analysis agent, initial event subevent verification agent, event tree header event analysis agent, event tree header event verification agent, strategy development agent, and strategy validation agent. 

\textbf{Initial Event Subevent Analysis Agent.} The initial event subevent analysis agent identifies subevents of initiating events. The agent's primary task is to examine detailed descriptions and cases, both correct and incorrect. For incorrect cases, the agent must summarize feedback to prevent recurrence. For correct cases, the agent should use the logic applied. When correct and incorrect cases are unavailable, the agent must analyze the subevents of initiating events. 

\textbf{Initial Event Subevent Verification Agent.} 
The initial event subevent validation agent is responsible for verifying the accuracy of subevents identified by the analysis agent. Based on the initial event and its detailed description, the agent determines whether the identified subevents match the actual subevents. The criteria for judgment include several key aspects. This ensures a thorough and accurate validation process.

\textbf{Event Tree Header Event Analysis Agent.} The event tree header event analysis agent specializes in identifying and listing the header events in event trees. The agent must also summarize feedback for incorrect cases to avoid repeated mistakes and reference the logic used in correct cases. Continuous improvement is encouraged by using feedback from incorrect cases to enhance future responses.

\textbf{Event Tree Header Event Verification Agent.} The event tree header events verification agent validated the accuracy of event tree header events provided by the header event analysis agent. This agent ensures that the identified header events match the actual event progression within NPPs. The verification process involves several key steps to maintain the reliability and integrity of the analysis. This contributes significantly to the accuracy and safety of operations.

\textbf{Strategy Development Agent.} 
The strategy development agent in NPPs is responsible for determining specific actions that operators should execute based on initial events, their descriptions, subevents, the sequence of events, and examples of correct and incorrect responses. The agent must ensure that the actions are detailed and accurate. For incorrect cases, the agent needs to summarize feedback to prevent recurrence, using the logic from correct cases to inform the analysis. 

\textbf{Strategy Validation Agent.} 
The strategy validation agent is responsible for assessing the recommendations provided by NPP strategy advisors. This agent determines if the recommendations align with the actual operator's suggestions. It ensures the safety and reliability of NPP operations through accurate validation of strategic advice.

\section{Simulation and Results}

\subsection{Experimental Setting}

\textbf{Dataset.} Our comprehensive dataset distribution is illustrated in Table ~\ref{dataset}. The dataset encompasses 11 distinct types of common initiating events observed in NPPs: loss of coolant accident (LOCA) with 5 instances, loss of heat sink accident (LOHSA) occurring 3 times, loss of feedwater event (LOFW) noted twice, and loss of offsite power (LOOP) occurring once. Additionally, anticipated transient without scram (ATWS) is the most frequent event, recorded 7 times. The dataset also includes 2 instances each of main feedwater line break (MFLB) and steam generator tube rupture (SGTR). The main steam line break (MSLB) has 6 occurrences, whereas the loss of DC power accident (LODC) is recorded 3 times. Moreover, the combination of main steam line break and steam generator tube rupture (MSLB+SGTR) is observed 5 times. Lastly, the secondary loop transient event (SLTE) is noted twice. For task1, there are 13 data points in total, with 10 used for training and 3 for testing. For task2 and task3, there are 38 data points in total, with 31 used for training and 7 for testing.

\begin{table}[h]
\caption{Frequency Distribution and Acronyms of Initial Events}\label{dataset}%
\begin{tabular}{@{}lll@{}}
\toprule
 Initial Event                            &  Acronym & Num \\ 
\midrule
Loss of Coolant Accident                                         & LOCA                            & 5                           \\
Loss of Heat Sink Accident     & LOHSA                           & 3                           \\
Loss of Feedwater Event                                          & LOFW                            & 2                           \\
Loss of Offsite Power                                            & LOOP                            & 1                           \\
Anticipated Transient Without Scram                              & ATWS                            & 7                           \\
Main Feedwater Line Break                                        & MFLB                            & 2                           \\
Main Steam Line Break                                            & MSLB                            & 6                           \\
Loss of DC Power Accident                                        & LODC                            & 3                           \\
Steam Generator Tube Rupture                                     & SGTR                            & 2                           \\
Main Steam Line Break Combined with Steam Generator Tube Rupture & MSLB+SGTR                       & 5                           \\
Secondary Loop Transient Event                                   & SLTE                            & 2                           \\ 
\botrule
\end{tabular}
\footnotetext{Note: The table lists the full name, acronym, and the number of occurrences of different incidents.}
\end{table}

\textbf{Implementation Details.} For each query, the number of utilized records and principles after retrieving is set to 1, i.e., only the top 1 relevant experience and records are adopted in the prompt. The record library and experience base are training from empty and will be updated dynamically during training to support further decisions. So the training of an NPP strategy development agent is similar to a new operator improving her/his skills by practicing. All of our simulation experiments leverage GPT-4o as the framework’s backbone model.

\subsection{Experience Accumulation}
\label{Experience Accumulation}
During the iterative process, we designed a validation agent that provides feedback both with and without reasons to discuss the impact of these reasons on accuracy. As illustrated in figure~\ref{task}, it depicts the growth in the number of experience-based and record library instances for three tasks as the number of training samples increases, under conditions with and without reasons.  Specifically, it refers to whether there are reasons when referencing experience-based and record library instances. 

\begin{figure}[H]
\centering
\includegraphics[width=0.9\textwidth]{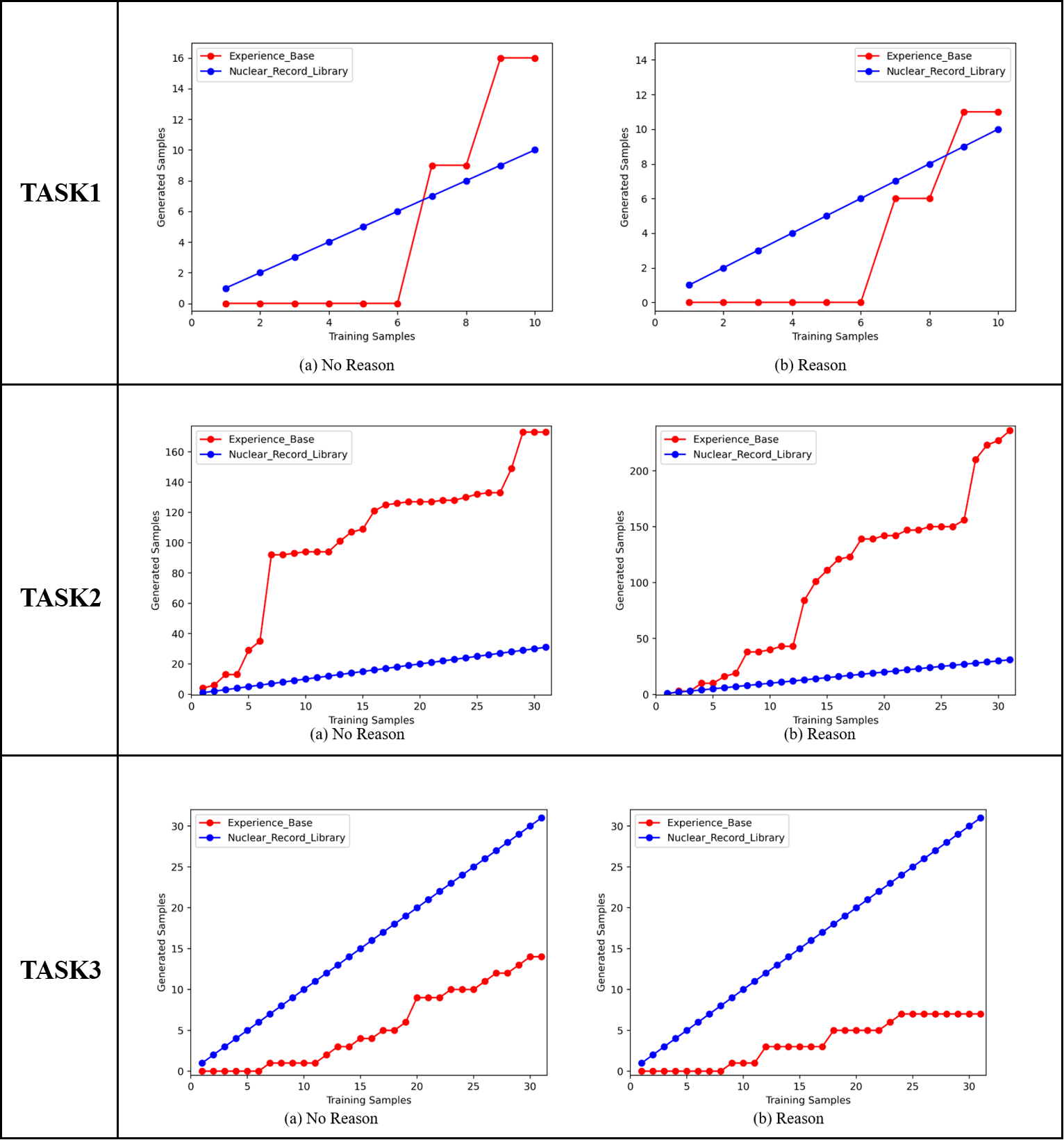}
\caption{{The number of accumulated principles and wrong answers of task2 with the increase of training samples.}}
\label{task}
\end{figure}

In subfigure ~\ref{task} task1 (a), it is observed that the experience base initially generates no samples up to six training samples, after which there is a significant increase, peaking at 16 samples with nine training samples. Conversely, the record library demonstrates a more consistent linear growth, steadily increasing the number of generated samples as the number of training samples increases. Subfigure ~\ref{task} task1 (b) shows a similar pattern. The experience base again shows no sample generation until six training samples, followed by a sharp increase. The record library continues to exhibit linear growth, maintaining a steady increase in generated samples corresponding to the increase in training samples.

In subfigure ~\ref{task} task2 (a), the experience base shows a gradual increase in the number of generated samples, with noticeable fluctuations, culminating at approximately 14 samples when the training samples reach 30. In contrast, the record library method demonstrates steady linear growth, consistently increasing the number of generated samples in direct proportion to the increase in training samples, reaching 31 samples when the training samples are 31. Subfigure ~\ref{task} task2 (b) reveals a similar trend. The experience base again shows a gradual increase with fluctuations, reaching around 7 samples at 30 training samples. Meanwhile, the record library maintains its linear growth pattern, achieving 31 samples at 31 training samples.

In subfigure ~\ref{task} task3 (a), the experience base demonstrates a non-linear growth pattern, with an initial slow increase followed by a sharp rise around six training samples, continuing with fluctuations and ultimately reaching approximately 165 samples at 30 training samples. On the other hand, the record library shows a consistent, linear growth, steadily increasing the number of generated samples, peaking at about 31 samples with 31 training samples.
Subfigure ~\ref{task} task3 (b) reveals a similar trend. The figure illustrates the growth in the number of generated samples for experience base, and record library, in the context of task3 as the number of training samples increases. The graph displays a non-linear growth pattern for the experience base. Initially, there is a slow increase in generated samples, followed by a significant rise beginning around six training samples. This method continues to show fluctuations with subsequent sharp increases, ultimately reaching approximately 250 samples at 31 training samples. In contrast, the NPP record library demonstrates a consistent, linear growth pattern. The number of generated samples steadily increases with the number of training samples, reaching around 31 samples at 31 training samples.

\subsection{Experimental Results}
\label{Experimental Results}
In this section, we discuss the performance of the experience base and the record library on the test dataset. We evaluate both systems in scenarios with and without reasoning of the record library and the experience base. These results correspond to three different tasks. We analyze the proportion of correct feedback within these results. Additionally, we examine the impact of providing reasons during testing. We investigate how the presence or absence of reasoning affects the agents' effectiveness.

As shown in Figure ~\ref{subevent} (a), the line representing feedback with reasoning consistently maintains a higher accuracy across almost all points, indicating that the inclusion of reasoning enhances the validation agents' performance. The accuracy of feedback without reasoning exhibits significant fluctuations. Conversely, the reasoning-included feedback line shows less variance and maintains a relatively stable and higher accuracy throughout, except for a notable dip around sample 6. Additionally, both systems ultimately achieved an accuracy of 1.

As shown in Figure ~\ref{subevent} (b), the feedback accuracy without reasoning starts high, dips significantly at the fifth sample, and then shows a pattern of recovery and stability. In contrast, the feedback accuracy with reasoning starts much lower but shows a dramatic increase to reach perfect accuracy by the eighth sample and maintains this level through the tenth sample.

\begin{figure}[H]
\centering
\includegraphics[width=0.9\textwidth]{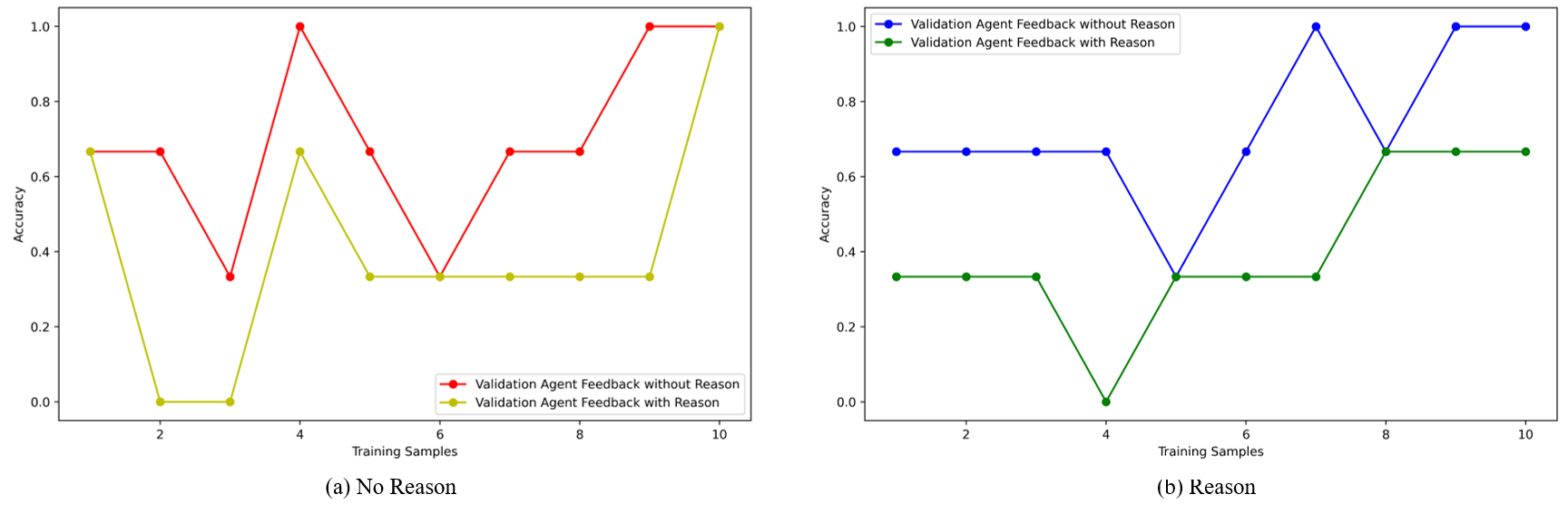}
\caption{{Task1 Experimental Results: Comparative Analysis of Validation Agent Feedback Accuracy Without Reasoning Across Training Samples.}}
\label{subevent}
\end{figure}

Results show that removing reasoning allows for more rapid attainment of high accuracy during the infer stage, as seen in the quicker ascent to high accuracy in the early stages in Figure~\ref{subevent} (a). However, the absence of reasoning in training data leads to significant fluctuations throughout the evaluation process. In contrast, incorporating reasoning, as depicted in Figure~\ref{subevent} (b), significantly enhances the stability of accuracy rates. This suggests that while omitting reasoning can yield quick initial results during the infer stage, adding reasoning in training data is crucial for achieving sustained and stable high accuracy in model performance during testing phases.

As for task2, Figure~\ref{header} (a) illustrates the accuracy of validation agent feedback across 31 training samples, differentiated by whether reasoning was included in the feedback process. The graph~\ref{header} (b) presents the accuracy of validation agent feedback across 31 training samples, comparing performances with and without the inclusion of reasoning. These figures suggest that removing reasoning plays a role in moderating the fluctuations in the accuracy of feedback. This stability is particularly noticeable during the latter half of the dataset, where the red line demonstrates a higher level of accuracy compared to the yellow line.

\begin{figure}[H]
\centering
\includegraphics[width=0.9 \textwidth]{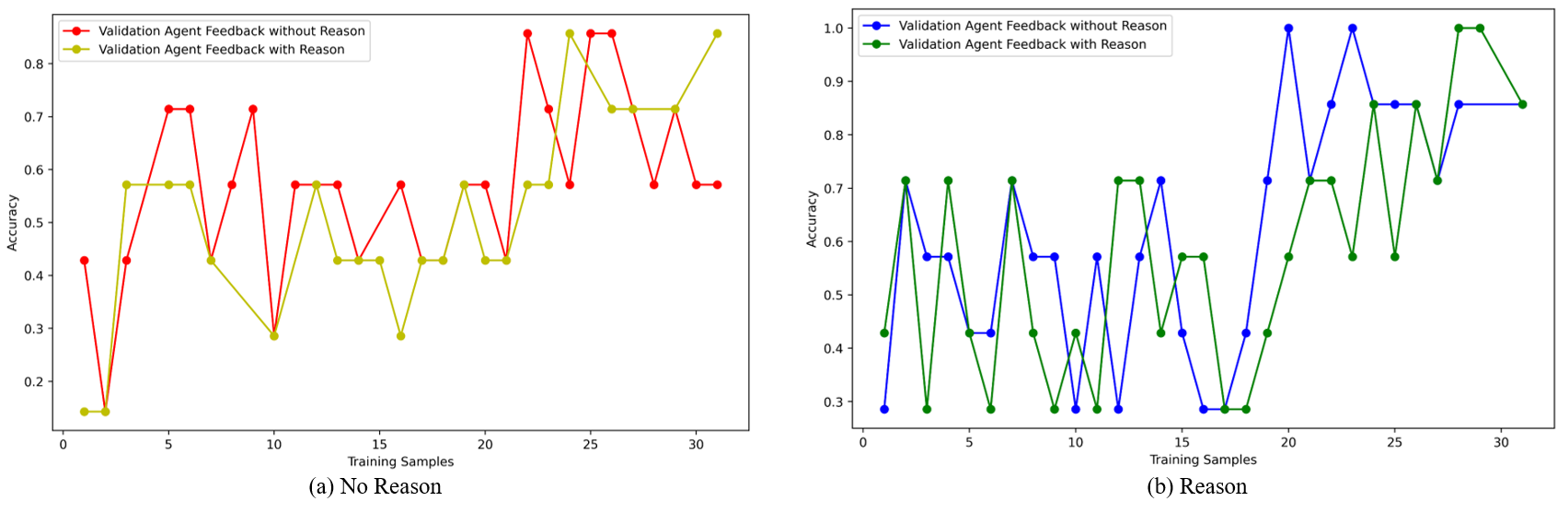}
\caption{{Task2 Experimental Results: Comparative Analysis of Validation Agent Feedback Accuracy Without Reasoning Across Training Samples.}}
\label{header}
\end{figure}

The comparative analysis of these graphs clearly illustrates that models incorporating reasoning in the validation process from experience bases and record libraries perform with enhanced stability and higher accuracy. The performance differential is particularly evident in figure~\ref{header} (b), where the blue line consistently outperforms the red line in Figure~\ref{header} (a), showcasing the benefits of reasoning in achieving more reliable and effective model outputs. This enhanced performance can be attributed to the additional contextual and logical frameworks that reasoning provides, allowing the model to navigate complex data scenarios more effectively.

In the strategy recommendation for task3, the strategy development agent effectively supports strategy development because the input information includes the header events of the event tree. However, due to the excessive length of the text inputs, the GPT generation is unstable, sometimes stopping midway. Therefore, for the evaluation of this task, we focus only on the first step where an accuracy of 100 \% is achieved to assess the model's performance.

As shown in Figure~\ref{strategy} (a), the accuracy values fluctuate significantly throughout the training process for both conditions. Notably, the first instance where the accuracy reaches 1.0 is marked by the yellow series, representing feedback with a reason. This peak occurs at the 11th training sample, highlighting a perfect accuracy achievement under this condition. This critical point indicates a potential advantage in providing reasons in validation agent feedback for achieving optimal accuracy early in the training sequence. The figure~\ref{strategy} (b) depicts that both conditions exhibit considerable fluctuations throughout the training samples, with the accuracy scores oscillating frequently above and below the 0.8 mark. The accuracy reaches 1.0 multiple times in both conditions, but notably, the green line (feedback with reason) first reaches full accuracy in the first training sample. 

\begin{figure}[H]
\centering
\includegraphics[width=0.9\textwidth]{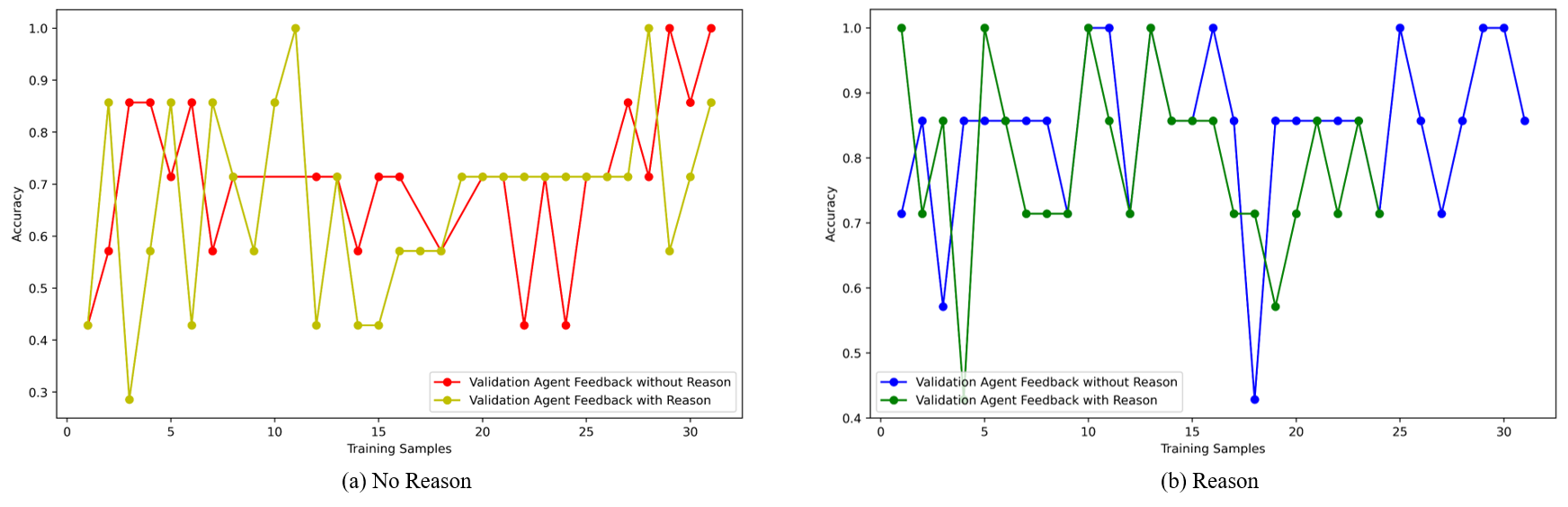}
\caption{{Task 3 Experimental Results: Comparative Analysis of Validation Agent Feedback Accuracy Without Reasoning Across Training Samples.}}
\label{strategy}
\end{figure}

The visualization in figure~\ref{strategy} (b) demonstrates a more stable and consistent high accuracy for feedback with a reason compared to figure~\ref{strategy} (a), suggesting that the inclusion of reasons in the feedback of training data can stabilize and enhance model performance both during the infer stage. In summary, these visualizations effectively argue that providing reasons in validation feedback both in training and validated data is beneficial for immediate infer stage performance. This insight supports the hypothesis that detailed, reason-based feedback fosters a more robust learning environment, potentially leading to better generalization and application in varied scenarios.

\subsection{Case Study}

To better demonstrate the effectiveness of our EvoTaskTree, this section presents a case study. Figure\ref{case} represents a case study describing an initial event involving a main steam line break combined with a steam generator tube rupture accident. Our overall approach employs a decision support method that integrates event tree prior knowledge.

\begin{figure}[H]
\centering
\includegraphics[width=0.9\textwidth]{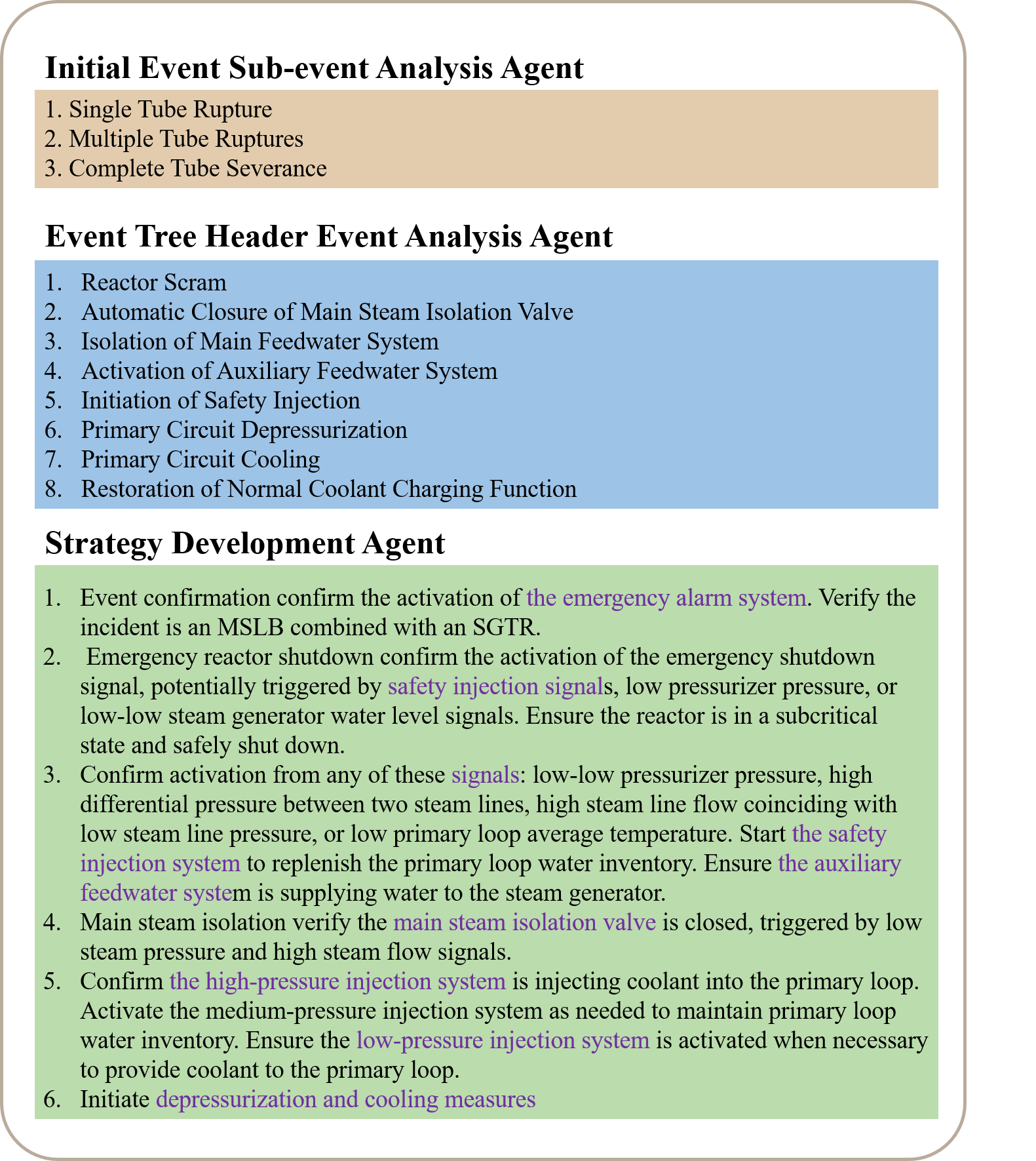}
\caption{{A Case Study for Steam Generator Tube Rupture in EvoTaskTree.}}
\label{case}
\end{figure}

First, the initiating event is fed into task1, represented in the yellow section, which is handled by our initial event subevent analysis agent. This produces a subevent list for the initial event. Next, the output from task1 is used as input for task2, the event tree header event analysis agent. Inputs include the initiating event, description of the initiating event, description of the event, and subevent, resulting in the event tree header event outputs. Finally, these results are input into task3, the strategy development agent, with inputs comprising the initiating event, description of initiating event, description of the event, subevent, event tree header event, and event progression and system response. The output provides the final strategy support.

\section{Discussion and Analysis}

In this section, we compare and discuss the performance of GPT-3.5 and GPT-4o on three tasks. We designed and compared four approaches: Vanilla, CoT, NPPprompt, and our EvoTaskTree. The Vanilla approach involves no prompts, with only a simple task description. The CoT approach builds on Vanilla by adding model-guiding content. The NPPprompt approach incorporates our designed agent prompt information, as detailed in Section \ref{prompt}. The EvoTaskTree approach is based on the framework outlined in Section \ref{pipe}. The evaluation criteria and reference content are consistent with those in Section \ref{inference}. 

Furthermore, for the subsequent tests, we adopted the best strategy derived from Section \ref{Experimental Results}. Specifically, for task1 and task2, we use prompts with reasons but without reasons during the generation process. For task3, we use prompts and generation processes that both include reasons.

As shown in Table ~\ref{task1-compare}, in CoT, due to the limited data with only three items in the test set, the results exhibit considerable variability. For GPT-3.5 no reason, the model generated empty content with too many irrelevant tasks in the results. In contrast, GPT-3.5 reason produced entirely irrelevant content. For GPT-4o no reason, two out of the three tasks were answered correctly, though one task still contained irrelevant content. Conversely, GPT-4o reason performed worse than GPT-4o no reason, with results similar to GPT-3.5 reason, where the content was overly scattered. Additionally, in both task1-prompt and task1-vanilla tests, the primary cause of errors was the overly scattered content that deviated from the correct answers. 

\begin{table}[h]
\caption{Main results of task1 on the test dataset using GPT-3.5 and GPT-4o as the backbones}\label{task1-compare}
\begin{tabular*}{\textwidth}{@{\extracolsep\fill}lcccc}
\toprule%
& \multicolumn{2}{@{}c@{}}{GPT-3.5 } & \multicolumn{2}{@{}c@{}}{GPT-4o } \\\cmidrule{2-3}\cmidrule{4-5}%
Project & No reason & Reason & No reason & Reason\\
\midrule
Vanilla & 33.3333 \% & 66.6667 \% & 33.3333 \% & 0  \\
 CoT & 0 & 0 & 66.6667 \% &  0        \\
   NPPprompt &  33.3333 \% &  0  & 33.3333 \% & 33.3333 \%  \\
EvoTaskTree & 33.3333 \% & 66.6667 \% & 100 \% & 66.6667 \%       \\
\botrule
\end{tabular*}
\footnotetext{Note: Our EvoTaskTree consistently achieved the highest accuracy rates.}

\end{table}

As shown in \ref{task2-compare}, in task2 header event analysis, according to Section \ref{Experience Accumulation}, this is the most challenging task. It requires not only accuracy in content but also correctness in order. This necessitates the model to fully understand the entire progression of the incident. Compared to task1 and task3, the results for task2 are generally not as good.

\begin{table}[h]
\caption{Main results of task2 on the test dataset using GPT-3.5 and GPT-4o as the backbones}\label{task2-compare}
\begin{tabular*}{\textwidth}{@{\extracolsep\fill}lcccc}
\toprule%
& \multicolumn{2}{@{}c@{}}{GPT-3.5 } & \multicolumn{2}{@{}c@{}}{GPT-4o } \\\cmidrule{2-3}\cmidrule{4-5}%
Project & No reason & Reason & No reason & Reason\\
\midrule
  Vanilla & 0 &  0  & 0 & 0  \\
   CoT & 0 & 0 & 0 &  0        \\
   NPPprompt &  0 &  0  & 0 & 42.85714 \%  \\
EvoTaskTree & 28.5714 \% & 42.8571 \% & 71.4286 \% & 85.7143 \%        \\
\botrule
\end{tabular*}
\footnotetext{Note: Our EvoTaskTree consistently achieved the highest accuracy rates.}
\end{table}

For the CoT strategy, GPT-3.5 no reason completely fails to perform the task and does not understand it at all. GPT-3.5 reason fails to recognize that the event tree headings form a sequence and can only analyze a single incident, often generating empty content, but overall, the quality of the generated content is higher than that of GPT-3.5 no reason. GPT-4o no reason misunderstands the task, treating it as an initial event analysis task, but it can generally complete the initial event analysis, although sometimes with incomplete content. GPT-4o reason performs similarly to GPT-4o no reason. Under the prompt strategy, GPT-3.5 no reason understands the task correctly but generates some irrelevant content, such as main steam pipe rupture events and steam generator heat transfer tube rupture events, deviating from the correct answers. The number of generated event headings far exceeds the actual number, sometimes reaching 24 (the correct number of event headings is 9). GPT-3.5 reason performs better than GPT-3.5 no reason, producing more concise and accurate content. GPT-4o no reason generates several event headings closer to the actual number and the content is closer to the real event headings, eliminating a lot of irrelevant descriptions, but it still does not match the correct answers. GPT-4 reason performs better than GPT-4o no reason, even producing three correct cases. For the vanilla strategy, GPT-3.5 no reason does not understand the task and generates no output. GPT-3.5 reason also does not understand the task, with results similar to GPT-3.5 no reason. GPT-4o no reason performs similarly to CoT's GPT-4o no reason. GPT-4o reason performs similarly to GPT-4o no reason but generates more comprehensive content. Our EvoTaskTree consistently achieved the highest accuracy rates. Under both GPT-4-o reason and no reason conditions, it maintained an accuracy rate of 85.7143 \%. Remarkably, under the GPT-4-o reason condition, its accuracy rate is double that of the most effective NPPprompt.

As shown in table ~\ref{task3-compare}, in the task3 testing, the results for the vanilla are shown in Table~\ref{task3-compare}. Firstly, for GPT-3.5 no reason, the output either consists of only the first step of emergency shutdown or is too generic with insufficient steps. In comparison, the GPT-3.5 reason output is not only incomplete but also performs worse than the GPT-3.5 no reason. Additionally, the GPT-4o no reason is highly unstable, with no complete steps. However, it performs better than GPT-3.5 reason overall. On the other hand, the GPT-4o reason output has one correct and complete step, while the rest of the data is similar to the GPT-4o no reason. Ultimately, their respective accuracies are 0\%, 0\%, 0\%, and 14.2857\%. 

\begin{table}[h]
\caption{Main results of task3 on the test dataset using GPT-3.5 and GPT-4 as the backbones}\label{task3-compare}
\begin{tabular*}{\textwidth}{@{\extracolsep\fill}lcccc}
\toprule%
& \multicolumn{2}{@{}c@{}}{GPT-3.5 } & \multicolumn{2}{@{}c@{}}{GPT-4o } \\\cmidrule{2-3}\cmidrule{4-5}%
Project & No reason & Reason & No reason & Reason\\
\midrule
  Vanilla & 0 & 0 & 0 & 14.2857 \%        \\
   CoT & 0 & 28.5714 \% & 0 &  0        \\
   NPPprompt & 57.1429 \% &  71.4286\% & 57.1429 \% & 71.4286 \%        \\
EvoTaskTree & 57.1429 \% & 71.4286 \% & 85.7143\% &  85.7143 \%        \\
\botrule
\end{tabular*}
\footnotetext{Note: Our EvoTaskTree consistently achieved the highest accuracy rates.}
\end{table}

In the context of CoT, the strategy support varies across different models. Under GPT-3.5 no reason, the strategy support task can only provide the first step, which is emergency shutdown. In contrast, GPT-3.5 reason offers relatively more comprehensive support, providing the first and second steps; however, its performance is unstable, sometimes directly outputting according to the initial event description. Specifically, the accuracy rates for GPT-3.5 no reason and GPT-3.5 reason are 0\% and 28.57\%, respectively. For GPT-4o no reason, the outputs are often irrelevant, with an accuracy rate of 0\%, indicating that relying solely on the CoT strategy is insufficient. Nevertheless, GPT-4o demonstrates stronger comprehension abilities compared to GPT-3.5. However, without prompt content, GPT-4o is prone to misunderstandings.

In NPPprompt, GPT-3.5 no reason shows a significant improvement compared to the CoT strategy. Aside from a few instances where the generated steps are insufficient, the responses are generally very accurate and closely match the real results, such as the reference procedure content. In contrast, the error rate for GPT-3.5 reason is higher than that of GPT-3.5 no reason. For GPT-4 no reason, except for the occasional issue of generating only a single sentence, the accuracy of normally generated content can reach 100\%. Meanwhile, GPT-4 reason demonstrates more stable performance compared to GPT-4 no reason. The accuracy rates for these tasks are 57.1429\%, 71.4286\%, 57.1429\%, and 71.4286\%, respectively.

As shown in Figure~\ref{compare}, comparing the three sets of experiments, EvoTaskTree outperformed the other two configurations in task1, task2, and task3. This indicates that integrating data from the record library and the experience base can significantly enhance the algorithm's performance, even elevating the results of the most complex task2 to be comparable to those of the simplest task3.  Furthermore, the configuration using only the record library also showed better performance across most tasks than the one using only the experience base, with the difference being particularly pronounced in task1.

\begin{figure}[H]
\centering
\includegraphics[width=0.9\textwidth]{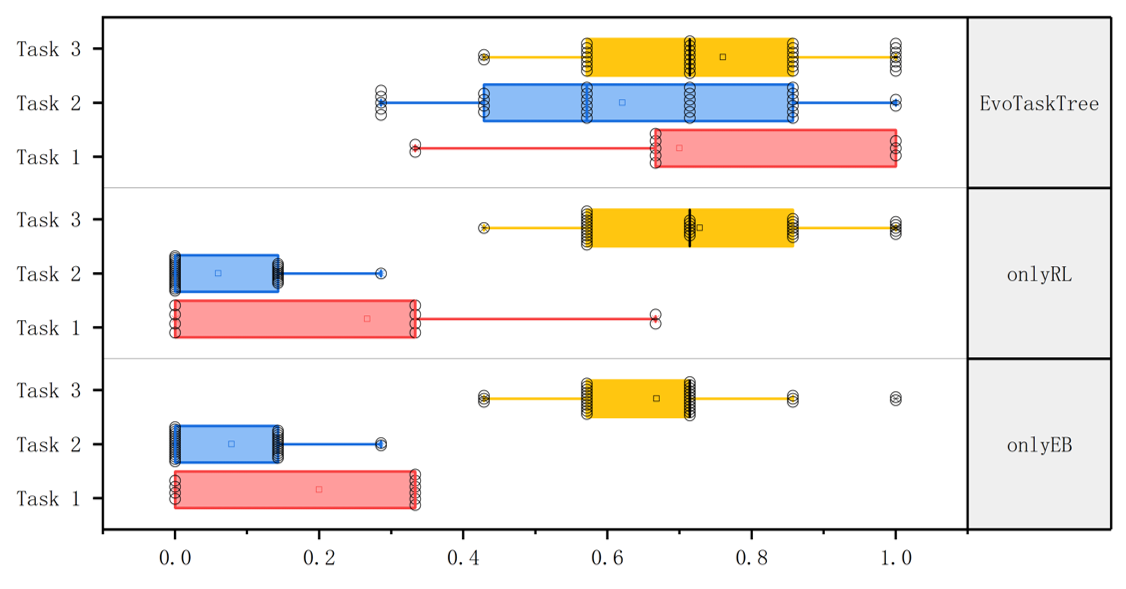}
\caption{{Ablation Study on Multi-Task Learning Performance: Comparing EvoTaskTree with Exclusive Use of Record Library and Experience Base Data}}
\label{compare}
\end{figure}

\section{Conclusion}
In this paper, we integrated the prior knowledge from the event tree analysis in QRA with emergency decision support, proposing a novel approach named EvoTaskTree. It is a LLM-driven simulacrum includes two types of roles (task executors and task validators), and the performance of these agents is evaluated and discussed across three related tasks: initial event subevent analysis, head event analysis, and decision support. This approach encompasses two types of knowledge: one is knowledge of successes, which is stored in the record library, and the other is knowledge of failures, which is preserved in the experience base. The primary conclusions are as follows. 

\begin{itemize}
\item The integrated analysis of event tree analysis and emergency decision support can effectively accomplish emergency decision support. Despite the inherent inaccuracies of LLMs, the accuracy of these tasks can even reach state-of-the-art performance at 100\%. Specifically, if the initial subevent analysis and head event analysis tasks are completed correctly, the strategy support test set can quickly converge to 100\% accuracy.
\item Among the proposed three tasks, the event tree header event analysis is the most challenging. During the training process of this task, EvoTaskTree requires multiple iterations.
\item Compared to other strategies such as Vanilla, CoT, NPPprompt, onlyRL, and onlyEB, EvoTaskTree demonstrates superior performance and significantly outperforms these strategies.
\item Furthermore, we find that the model performs well when the reference experience base and record library include reasons, and the feedback during testing does not include reasons for the initial subevent analysis and head event analysis. Conversely, as for decision recommendations, the model performs better when the reference experience base and record library include reasons, and the feedback during testing also includes reasons.
\end{itemize} 

However, this paper also has some limitations. Future research can consider integrating the fault diagnosis \cite{qi2023, xiao2023, qi2022} process into the entire workflow, thereby making the overall approach more comprehensive.

\section*{Declarations}

\subsection*{Funding}
The research was supported by the Innovation Funds of CNNC–Tsinghua Joint Center for Nuclear Energy R\&D (Project No. 20202009032) and a grant from the National Natural Science Foundation of China (Grant No. T2192933).

\subsection*{Conflict of interest/Competing interests}
The authors declare that they have no conflict of interest.
\subsection*{Ethics approval and consent to participate}
This research did not involve any studies with human participants or animals performed by any of the authors.
\subsection*{Consent for publication}
All authors have reviewed and approved the manuscript for publication. Furthermore, since this study did not involve human participants or animals, specific consent for publication is not applicable.
\subsection*{Data availability}
The datasets generated and/or analyzed during the current study are available from the corresponding author upon reasonable request.
\subsection*{Materials availability}
Not applicable.
\subsection*{Code availability}
The complete code is available on the GitHub website (https://github.com/Crystalxy123/EvoTaskTree/tree/master)

\subsection*{Author contribution}
Xiao Xingyu: Methodology, Software, Formal analysis, Data Curation, Visualization, Validation, Writing- Original draft preparation. Chen Peng: Software, Methodology, Investigation. Qi Ben: Investigation, Writing - Review and Editing. Liang Jingang: Conceptualization, Resources, Supervision, Writing - Review and Editing, Project administration, Funding acquisition. Tong Jiejuan: Investigation, Supervision, Writing - Review and Editing. Wang Haitao: Supervision, Writing- Reviewing and Editing.

\bibliography{sn-bibliography}% common bib file

\end{document}